\documentclass[11pt]{article}

\usepackage[margin=1in]{geometry}
\usepackage[utf8]{inputenc}
\usepackage[T1]{fontenc}
\usepackage{lmodern}
\usepackage{microtype}
\usepackage{setspace}
\usepackage{amsmath, amssymb, amsthm}
\usepackage{mathtools}

\usepackage{graphicx}
\usepackage{booktabs}
\usepackage{multirow}
\usepackage{caption}
\usepackage[dvipsnames]{xcolor}
\usepackage{enumitem}
\usepackage{float}
\usepackage{algorithm}
\usepackage{algpseudocode}
\usepackage{fancyhdr}
\usepackage{titlesec}

\usepackage[colorlinks=true, linkcolor=NavyBlue, citecolor=NavyBlue, urlcolor=NavyBlue]{hyperref}

\titleformat{\section}{\large\bfseries}{\thesection.}{0.5em}{}
\titleformat{\subsection}{\normalsize\bfseries}{\thesubsection}{0.5em}{}
\titlespacing*{\section}{0pt}{1.5em}{0.5em}
\titlespacing*{\subsection}{0pt}{1em}{0.3em}

\theoremstyle{definition}
\newtheorem{definition}{Definition}[section]
\theoremstyle{plain}
\newtheorem{theorem}{Theorem}[section]
\newtheorem{proposition}[theorem]{Proposition}
\newtheorem{corollary}[theorem]{Corollary}
\theoremstyle{remark}
\newtheorem{remark}{Remark}[section]

\newcommand{\R}{\mathbb{R}}
\newcommand{\E}{\mathcal{E}}
\newcommand{\Cop}{\mathcal{C}}
\newcommand{\Bop}{\mathcal{B}}
\newcommand{\loss}{\mathcal{L}}
\newcommand{\norm}[1]{\left\lVert #1 \right\rVert}
\newcommand{\inner}[2]{\langle #1, #2 \rangle}

\definecolor{conservgreen}{HTML}{27AE60}
\definecolor{basered}{HTML}{C0392B}
\definecolor{penaltyorange}{HTML}{D35400}

\begin{document}

% ============================================================
% TITLE BLOCK
% ============================================================
\begin{center}
    {\LARGE \textbf{Energy-Conserved Neural Pipelines}} \\[0.6em]
    {\large Attenuating Error Propagation in Modular Neural Networks \\
    via Physical Conservation Constraints} \\[1.2em]
    {\normalsize David Young\textsuperscript{$\dagger$} \quad Swan Yi Htet\textsuperscript{$\dagger$}} \\[0.2em]
    {\small ORION Robotics} \\[0.2em]
    {\scriptsize \textsuperscript{$\dagger$}Equal contribution} \\[0.5em]
    {\small Working Technical Report}
\end{center}

\vspace{0.5em}
\hrule
\vspace{1em}

% ============================================================
% ABSTRACT
% ============================================================
\begin{quote}
\small
\textbf{Abstract.}
Modular neural network pipelines suffer from error compounding: noise at any module boundary propagates and potentially amplifies through subsequent modules. We introduce \textit{energy conservation} as a hard physical constraint on inter-module information flow. Activation energy (the squared $L^2$ norm of feature vectors) is enforced to be exactly preserved at every module boundary. Unlike soft energy penalties, conservation is an inviolable law: the network may redistribute energy across neurons but cannot create or destroy it.

Four experiments on CIFAR-10 demonstrate: (1)~conservation retains $77.4\% \pm 1.0\%$ of clean accuracy at noise $\sigma = 0.2$, versus $35.1\% \pm 1.1\%$ for baselines and $30.9\% \pm 0.9\%$ for energy-penalized models ($p < 0.001$, 5 seeds); (2)~pipelines become depth-invariant, retaining 93.3\% at depths 2 through 5 with noise at every boundary; (3)~the advantage generalizes to systematic bias ($+45.1\%$), Gaussian ($+40.4\%$), and adversarial noise ($+4.8\%$), with a principled non-effect on dropout ($-0.3\%$); (4)~on CIFAR-ResNet-18, the conservation advantage tracks the absence of intrinsic normalization: with BatchNorm, the gap at $\sigma = 0.2$ is $+0.3$ pp; without BatchNorm, it widens to $+26.2$ pp and reaches $+58.0$ pp at $\sigma = 0.5$. This localizes conservation as a substitute for intra-architectural normalization at module boundaries. Experiment~5 validates the operator on a real modular robotic pipeline (perception module + motor-control module, MuJoCo physics, Franka Panda manipulator). Across three independent runs on separate machines (90 trials per cell), conservation provides a consistent $+18.9$ pp average advantage on monocular-depth-style noise, the failure mode that dominates real perception-to-control deployments. The operator acts neutrally on isotropic Gaussian noise in 3D, confirming that the advantage is noise-type-specific. A formal bound proves conserved noise energy is strictly less than input noise energy.

\medskip
\textbf{Keywords:} energy conservation, modular neural networks, error propagation, noise attenuation, physical constraints
\end{quote}

\vspace{0.5em}
\hrule
\vspace{1.5em}

% ============================================================
\section{Introduction}
% ============================================================

Modular neural network architectures compose independently functioning subnetworks into multi-stage pipelines. This design arises naturally in robotic systems (perception $\to$ planning $\to$ control), multi-modal reasoning (vision $\to$ language $\to$ action), and hierarchical inference. Modularity enables independent development, testing, and improvement of components.

However, modularity introduces a well-documented failure mode: \textit{error compounding}. When module~$k$ produces corrupted output, module~$k{+}1$ processes this corrupted input. If the weight matrix~$\mathbf{W}_{k+1}$ has spectral norm greater than unity (common in trained networks), noise is amplified. Over~$K$ modules, noise energy can grow exponentially:
\begin{equation}
    \E(\boldsymbol{\epsilon}_K) \;\leq\; \left(\prod_{k=1}^{K} \sigma_{\max}(\mathbf{W}_k)^2\right) \cdot \E(\boldsymbol{\epsilon}_0)
    \label{eq:compounding}
\end{equation}
where $\sigma_{\max}(\mathbf{W}_k)$ is the spectral norm at layer~$k$. This exponential growth has been identified as the primary reason the classical sense-plan-act paradigm in robotics was abandoned in favor of end-to-end systems~\cite{murphy2019, brooks1986}.

Two existing categories of approaches address this problem:

\begin{itemize}[leftmargin=1.5em, itemsep=0.3em]
    \item \textbf{End-to-end learning} eliminates module boundaries entirely, training a single monolithic model. This avoids compounding but sacrifices modularity, interpretability, and the ability to improve individual components.
    \item \textbf{Regularization} adds penalty terms to the training loss (dropout, noise injection, energy penalties) that statistically improve robustness but provide no deterministic guarantee.
\end{itemize}

This work introduces a third approach grounded in physics: \textbf{energy conservation as a hard constraint on inter-module information flow}. At every module boundary, we enforce that output activation energy exactly equals input activation energy. This is not a penalty to be minimized; it is a conservation law that holds during both training and inference.

\subsection{Contributions}

\begin{enumerate}[leftmargin=1.5em, itemsep=0.3em]
    \item A formal definition of the energy conservation operator for neural activations, with a proof that conserved noise energy is strictly bounded below the input noise energy (Theorem~\ref{thm:attenuation}).
    \item Empirical demonstration that conservation provides $+42.3\%$ accuracy retention over unconstrained baselines, while the superficially similar energy \textit{penalty} degrades performance by~$-5.8\%$.
    \item Evidence that conservation makes pipelines depth-invariant: 5-module pipelines retain identical performance to 2-module pipelines, even with noise at every boundary.
    \item Identification of a principled limitation: conservation attenuates energy-\textit{additive} noise but not energy-\textit{removing} noise (dropout), consistent with the theoretical framework.
    \item A normalization ablation on ResNet-18 that localizes the conservation mechanism. The advantage scales inversely with the architecture's intrinsic normalization, supporting the interpretation that conservation provides at module boundaries the regularization that BatchNorm provides within blocks.
    \item Validation on a real modular robotic pipeline (perception + motor control, MuJoCo physics, Franka Panda), reproduced across three independent machines (90 trials per cell), showing a consistent $+18.9$ pp average advantage on depth-drift noise. Conservation acts neutrally on isotropic Gaussian noise in 3D, confirming the advantage is noise-type-specific rather than universal.
\end{enumerate}

% ============================================================
\section{Theoretical Framework}
% ============================================================

\subsection{Activation Energy}

\begin{definition}[Activation Energy]
\label{def:energy}
Let $\mathbf{x} \in \R^d$ be an activation vector produced by a neural network layer. The \textbf{energy} of~$\mathbf{x}$ is:
\begin{equation}
    \E(\mathbf{x}) \;:=\; \norm{\mathbf{x}}^2_2 \;=\; \sum_{i=1}^{d} x_i^2
\end{equation}
\end{definition}

This is the standard definition of signal energy in physics and signal processing. Energy is non-negative and additive for uncorrelated signals: when $\inner{\mathbf{x}}{\boldsymbol{\epsilon}} = 0$,
\begin{equation}
    \E(\mathbf{x} + \boldsymbol{\epsilon}) = \E(\mathbf{x}) + \E(\boldsymbol{\epsilon})
\end{equation}

All constraints are computed and enforced independently per sample within each minibatch, preserving standard stochastic gradient descent dynamics.

\subsection{The Conservation Operator}

\begin{definition}[Energy Conservation Operator]
\label{def:conservation}
Let $\mathbf{x}_{\textup{in}} \in \R^{d_1}$ be the input to a module and $\mathbf{y}_{\textup{raw}} = f(\mathbf{x}_{\textup{in}}) \in \R^{d_2}$ the unconstrained output. The \textbf{conservation operator} is:
\begin{equation}
    \Cop(\mathbf{x}_{\textup{in}},\, \mathbf{y}_{\textup{raw}}) \;:=\; \mathbf{y}_{\textup{raw}} \cdot \sqrt{\frac{\E(\mathbf{x}_{\textup{in}})}{\E(\mathbf{y}_{\textup{raw}})}}
    \label{eq:conservation}
\end{equation}
with $\E(\mathbf{y}_{\textup{raw}})$ clamped to $\max(\E(\mathbf{y}_{\textup{raw}}),\, 10^{-8})$ for numerical stability.
\end{definition}

\begin{proposition}[Exact Energy Preservation]
\label{prop:preservation}
The conservation operator preserves energy exactly:
$$\E\big(\Cop(\mathbf{x}_{\textup{in}},\, \mathbf{y}_{\textup{raw}})\big) \;=\; \E(\mathbf{x}_{\textup{in}})$$
\end{proposition}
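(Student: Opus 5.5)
The plan is a direct computation from Definition~\ref{def:energy} and Definition~\ref{def:conservation}, using only the fact that the energy is $2$-homogeneous under scalar multiplication: for any scalar $c \in \R$ and any $\mathbf{y} \in \R^{d_2}$,
\begin{equation*}
\E(c\,\mathbf{y}) = \sum_{i=1}^{d_2} (c\,y_i)^2 = c^2 \sum_{i=1}^{d_2} y_i^2 = c^2\,\E(\mathbf{y}).
\end{equation*}
I would record this identity first as a one-line lemma. Note that it holds even though $\mathbf{x}_{\textup{in}}$ and $\mathbf{y}_{\textup{raw}}$ may live in spaces of different dimensions $d_1 \neq d_2$. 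This is harmless, since the operator only compares the two scalars $\E(\mathbf{x}_{\textup{in}})$ and $\E(\mathbf{y}_{\textup{raw}})$.

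Next I would set $c := \sqrt{\E(\mathbf{x}_{\textup{in}})/\E(\mathbf{y}_{\textup{raw}})}$. This is a well-defined nonnegative real number, because both energies are nonnegative and the denominator is positive. Then $\Cop(\mathbf{x}_{\textup{in}},\mathbf{y}_{\textup{raw}}) = c\,\mathbf{y}_{\textup{raw}}$, and the lemma gives
\begin{equation*}
\E\big(\Cop(\mathbf{x}_{\textup{in}},\mathbf{y}_{\textup{raw}})\big) = c^2\,\E(\mathbf{y}_{\textup{raw}}) = \frac{\E(\mathbf{x}_{\textup{in}})}{\E(\mathbf{y}_{\textup{raw}})}\,\E(\mathbf{y}_{\textup{raw}}) = \E(\mathbf{x}_{\textup{in}}),
\end{equation*}
which is the claim.

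The only real obstacle is the numerical clamp in Definition~\ref{def:conservation}. If $\E(\mathbf{y}_{\textup{raw}}) < 10^{-8}$, then the denominator is replaced by $10^{-8}$. The computation above then gives
\begin{equation*}
\E(\Cop) = \E(\mathbf{x}_{\textup{in}})\cdot \E(\mathbf{y}_{\textup{raw}})/10^{-8} < \E(\mathbf{x}_{\textup{in}}),
\end{equation*}
and this equals $0$ when $\mathbf{y}_{\textup{raw}} = \mathbf{0}$. So exact preservation cannot hold in that regime. I would therefore split into two cases:
\begin{itemize}
\item In the regime $\E(\mathbf{y}_{\textup{raw}}) \ge 10^{-8}$, the clamp is inactive and the identity is exact.
\item In the degenerate regime, the operator is only energy non-increasing, with $\E(\Cop) \le \E(\mathbf{x}_{\textup{in}})$.
\end{itemize}
The proposition should then be read as holding for the idealized unclamped operator, equivalently whenever $\E(\mathbf{y}_{\textup{raw}}) \ge 10^{-8}$. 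I would state this hypothesis explicitly in the proof rather than leave it implicit.
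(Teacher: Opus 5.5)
Your argument is exactly the paper's: set $s=\sqrt{\E(\mathbf{x}_{\textup{in}})/\E(\mathbf{y}_{\textup{raw}})}$, use $\E(s\,\mathbf{y}_{\textup{raw}})=s^2\,\E(\mathbf{y}_{\textup{raw}})$, and cancel. Your remark about the clamp is correct, and the paper's proof silently ignores it: exact preservation holds only when $\E(\mathbf{y}_{\textup{raw}})\ge 10^{-8}$, and in the clamped regime energy is merely non-increasing. Your strict ``$<$'' there also tacitly assumes $\E(\mathbf{x}_{\textup{in}})>0$.
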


\begin{proof}
Let $s = \sqrt{\E(\mathbf{x}_{\textup{in}}) / \E(\mathbf{y}_{\textup{raw}})}$. Then:
$$\E(s \cdot \mathbf{y}_{\textup{raw}}) = s^2 \cdot \E(\mathbf{y}_{\textup{raw}}) = \frac{\E(\mathbf{x}_{\textup{in}})}{\E(\mathbf{y}_{\textup{raw}})} \cdot \E(\mathbf{y}_{\textup{raw}}) = \E(\mathbf{x}_{\textup{in}}) \qedhere$$
\end{proof}

\begin{definition}[Energy Budget Operator]
\label{def:budget}
At the pipeline entry point, we establish a fixed energy budget $E_0 > 0$:
\begin{equation}
    \Bop(\mathbf{x},\, E_0) \;:=\; \mathbf{x} \cdot \sqrt{\frac{E_0}{\E(\mathbf{x})}}
\end{equation}
This normalizes any feature vector to energy exactly~$E_0$, establishing the initial energy supply for the pipeline.
\end{definition}

\subsection{Geometric Interpretation}

Geometrically, the conservation operator~$\Cop$ projects $\mathbf{y}_{\textup{raw}}$ onto a hypersphere of radius~$r = \norm{\mathbf{x}_{\textup{in}}}$ centered at the origin. The \textit{direction} of $\mathbf{y}_{\textup{raw}}$ (which encodes the learned transformation) is preserved; only the \textit{magnitude} is adjusted. The network can learn any angular transformation, including any rotation, reflection, or non-uniform scaling of directions, subject only to the constraint that the output lies on the same-radius sphere as the input.

This differs from existing normalization in a critical way:

\begin{itemize}[leftmargin=1.5em, itemsep=0.3em]
    \item \textbf{$L^2$ normalization} projects onto the unit sphere ($r = 1$), destroying all magnitude information. Two inputs of vastly different magnitudes produce identical-norm outputs.
    \item \textbf{BatchNorm} standardizes across the batch to zero mean and unit variance, but does not constrain per-sample energy.
    \item \textbf{LayerNorm} standardizes each sample independently, fixing energy to a dimension-dependent constant \textit{regardless of input energy}.
\end{itemize}

The critical distinction is \textbf{causality}: conservation ties output energy to input energy, establishing a directional energy flow through the pipeline. A confident upstream module (high energy) propagates high energy downstream. An uncertain module (low energy) propagates low energy. This causal coupling is absent in all existing normalization methods.

\subsection{Noise Attenuation Bound}

\begin{theorem}[Noise Attenuation Bound]
\label{thm:attenuation}
Let $\mathbf{x} \in \R^d$ be a clean feature vector with energy $E_0 = \E(\mathbf{x})$, and let $\tilde{\mathbf{x}} = \mathbf{x} + \boldsymbol{\epsilon}$ where $\boldsymbol{\epsilon}$ is additive noise uncorrelated with~$\mathbf{x}$ (i.e., $\inner{\mathbf{x}}{\boldsymbol{\epsilon}} = 0$). After applying the energy budget operator~$\Bop(\tilde{\mathbf{x}}, E_0)$, the noise energy in the conserved output satisfies:
\begin{equation}
    \E(\boldsymbol{\epsilon}_{\textup{conserved}}) \;=\; \frac{E_0 \cdot \E(\boldsymbol{\epsilon})}{E_0 + \E(\boldsymbol{\epsilon})} \;<\; \E(\boldsymbol{\epsilon})
    \label{eq:bound}
\end{equation}
The conserved noise energy is \textbf{strictly less} than the input noise energy for all $\E(\boldsymbol{\epsilon}) > 0$.
\end{theorem}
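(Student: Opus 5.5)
The plan is to make precise what ``noise in the conserved output'' means, and then reduce the claim to a short computation with the scale factor of $\Bop$. Write the output as
$$\Bop(\tilde{\mathbf{x}}, E_0) = s\,\tilde{\mathbf{x}} = s\,\mathbf{x} + s\,\boldsymbol{\epsilon}, \qquad s := \sqrt{\frac{E_0}{\E(\tilde{\mathbf{x}})}}.$$
I will define $\boldsymbol{\epsilon}_{\textup{conserved}} := s\,\boldsymbol{\epsilon}$, the scaled image of the input noise. This is the natural reading, because $\Bop$ acts linearly on $\tilde{\mathbf{x}}$ for a fixed sample. In the proof I will state it explicitly as the convention being used.

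The first step applies the additivity identity stated after Definition~\ref{def:energy}. Since $\inner{\mathbf{x}}{\boldsymbol{\epsilon}} = 0$, we get $\E(\tilde{\mathbf{x}}) = E_0 + \E(\boldsymbol{\epsilon})$. This is positive because $E_0 > 0$, so the clamp in the definition never activates and $s$ is well defined. The second step uses homogeneity of $\E$, as in the proof of Proposition~\ref{prop:preservation}:
$$\E(s\boldsymbol{\epsilon}) = s^2\,\E(\boldsymbol{\epsilon}) = \frac{E_0\,\E(\boldsymbol{\epsilon})}{E_0 + \E(\boldsymbol{\epsilon})}.$$
This gives the claimed equality.

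For the strict inequality, assume $\E(\boldsymbol{\epsilon}) > 0$. Then $E_0 / (E_0 + \E(\boldsymbol{\epsilon})) < 1$, and multiplying by the positive quantity $\E(\boldsymbol{\epsilon})$ gives the result. As a consistency check I will also note the companion identity $\E(s\mathbf{x}) = E_0^2/(E_0+\E(\boldsymbol{\epsilon}))$. The signal and noise parts of the output remain orthogonal, and their energies sum to $E_0$, in agreement with Proposition~\ref{prop:preservation}.

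The only real obstacle is the definitional one. The theorem never defines $\boldsymbol{\epsilon}_{\textup{conserved}}$, and the bound depends on treating it as $s\boldsymbol{\epsilon}$. Under a different reading, namely the deviation from the clean vector, $\Bop(\tilde{\mathbf{x}},E_0) - \mathbf{x} = (s-1)\mathbf{x} + s\boldsymbol{\epsilon}$, the energy would pick up an extra term $(1-s)^2E_0$, and the identity would no longer hold exactly. I will therefore fix the definition explicitly at the start of the proof. I will also remark that the orthogonality hypothesis is what turns additivity into an exact equality; without it, only bounds via Cauchy--Schwarz would be available.
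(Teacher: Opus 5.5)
Your proposal is correct and matches the paper's proof step for step: orthogonality gives $\E(\tilde{\mathbf{x}}) = E_0 + \E(\boldsymbol{\epsilon})$, the scale factor is $s = \sqrt{E_0/(E_0+\E(\boldsymbol{\epsilon}))}$, the noise component is $s\boldsymbol{\epsilon}$ with energy $s^2\E(\boldsymbol{\epsilon})$, and strictness follows from $s^2<1$. The paper uses your convention $\boldsymbol{\epsilon}_{\textup{conserved}} := s\boldsymbol{\epsilon}$ only implicitly (``the noise component after scaling''), so stating it up front is a useful clarification; one small slip is that the $10^{-8}$ clamp belongs to $\Cop$, not $\Bop$, so that remark is unnecessary.
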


\begin{proof}
Under the uncorrelated noise assumption:
$$\E(\tilde{\mathbf{x}}) = \E(\mathbf{x}) + \E(\boldsymbol{\epsilon}) = E_0 + \E(\boldsymbol{\epsilon})$$

The conservation scale factor is:
$$s = \sqrt{\frac{E_0}{E_0 + \E(\boldsymbol{\epsilon})}}$$

The conserved output is $\Bop(\tilde{\mathbf{x}}, E_0) = s \cdot (\mathbf{x} + \boldsymbol{\epsilon})$. The noise component after scaling is $s \cdot \boldsymbol{\epsilon}$, with energy:
\begin{align}
    \E(s \cdot \boldsymbol{\epsilon}) &= s^2 \cdot \E(\boldsymbol{\epsilon}) \nonumber \\[0.3em]
    &= \frac{E_0}{E_0 + \E(\boldsymbol{\epsilon})} \cdot \E(\boldsymbol{\epsilon}) \nonumber \\[0.3em]
    &= \frac{E_0 \cdot \E(\boldsymbol{\epsilon})}{E_0 + \E(\boldsymbol{\epsilon})}
\end{align}

Since $E_0 + \E(\boldsymbol{\epsilon}) > E_0$ for any $\E(\boldsymbol{\epsilon}) > 0$, the fraction $E_0 / (E_0 + \E(\boldsymbol{\epsilon})) < 1$, and therefore $\E(s \cdot \boldsymbol{\epsilon}) < \E(\boldsymbol{\epsilon})$.
\end{proof}

\begin{corollary}[Asymptotic Suppression]
\label{cor:asymptotic}
As noise energy grows without bound:
$$\lim_{\E(\boldsymbol{\epsilon}) \to \infty} \E(\boldsymbol{\epsilon}_{\textup{conserved}}) = E_0$$
Regardless of how large the noise becomes, the conserved noise energy never exceeds the energy budget~$E_0$. This establishes a hard ceiling on noise propagation that is absent in unconstrained pipelines, where noise energy can grow as $\sigma_{\max}(\mathbf{W})^2 \cdot \E(\boldsymbol{\epsilon})$ without bound.
\end{corollary}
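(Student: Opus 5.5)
The plan is to read the limit off the closed form already established in Theorem~\ref{thm:attenuation}. Under its hypotheses (clean vector $\mathbf{x}$ with $\E(\mathbf{x}) = E_0$, additive noise with $\inner{\mathbf{x}}{\boldsymbol{\epsilon}} = 0$, output $\Bop(\tilde{\mathbf{x}}, E_0)$), it gives
$$\E(\boldsymbol{\epsilon}_{\textup{conserved}}) = g(t) := \frac{E_0\, t}{E_0 + t}, \qquad t := \E(\boldsymbol{\epsilon}) \ge 0.$$
So the corollary reduces to facts about the scalar function $g$ on $[0,\infty)$, with $E_0 > 0$ fixed.

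\textbf{Step 1: the limit.} Rewrite $g(t) = E_0 - \dfrac{E_0^2}{E_0 + t}$. The subtracted term tends to $0$ as $t \to \infty$, so $g(t) \to E_0$. This is the displayed statement.

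\textbf{Step 2: the hard ceiling.} The same rewriting shows $g(t) < E_0$ for every finite $t$, because $E_0^2/(E_0+t) > 0$. Also $g'(t) = E_0^2/(E_0+t)^2 > 0$, so $g$ is strictly increasing. Hence $E_0$ is the supremum of the conserved noise energy over all noise levels, and it is never attained. A cross-check: the conserved noise is $s\boldsymbol{\epsilon}$ while the whole output $s(\mathbf{x}+\boldsymbol{\epsilon})$ has energy exactly $E_0$ (Proposition~\ref{prop:preservation} applied to $\Bop$). Orthogonality gives $\E(s\boldsymbol{\epsilon}) = E_0 - s^2 E_0 \le E_0$, which is the same ceiling obtained directly from energy additivity.

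\textbf{Step 3: the contrast with unconstrained pipelines.} For a single linear layer, choose $\boldsymbol{\epsilon}$ along the top right-singular vector of $\mathbf{W}$. Then $\E(\mathbf{W}\boldsymbol{\epsilon}) = \sigma_{\max}(\mathbf{W})^2\, \E(\boldsymbol{\epsilon})$, which is unbounded as $t \to \infty$. This matches \eqref{eq:compounding} with $K=1$.

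The main obstacle is interpretive rather than computational. The limit depends on the hypothesis $\inner{\mathbf{x}}{\boldsymbol{\epsilon}} = 0$ holding for each noise vector along the limit. So I would state explicitly that the limit is taken within the orthogonal family covered by Theorem~\ref{thm:attenuation}.

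Without orthogonality, the noise component $s\boldsymbol{\epsilon}$ need not be bounded by $E_0$. The correct general bound comes from the triangle inequality:
$$\norm{s\boldsymbol{\epsilon}} \le \norm{s\tilde{\mathbf{x}}} + \norm{s\mathbf{x}} \le \sqrt{E_0} + \sqrt{E_0}, \qquad \text{so} \qquad \E(s\boldsymbol{\epsilon}) \le 4E_0,$$
using $s \le 1$ whenever $\E(\tilde{\mathbf{x}}) \ge E_0$. I would mention this as the fallback ceiling, so that the "never exceeds" claim stays correct up to a constant in the general case.
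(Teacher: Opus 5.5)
Your proposal is correct and follows the paper's route: the paper gives no separate proof and reads the corollary directly off the closed form in Theorem~\ref{thm:attenuation}, which is what your rewriting $E_0 t/(E_0+t) = E_0 - E_0^2/(E_0+t)$ does to get both the limit and the strict ceiling. One refinement to your caveat: the limit itself does not need orthogonality, because $\norm{s\mathbf{x}} = E_0/\norm{\tilde{\mathbf{x}}} \to 0$ forces $\E(s\boldsymbol{\epsilon}) \to E_0$ for any noise with $\E(\boldsymbol{\epsilon}) \to \infty$; only the finite-level ceiling depends on the hypothesis (it holds exactly when $\inner{\mathbf{x}}{\boldsymbol{\epsilon}} \ge -E_0/2$), so your $4E_0$ fallback matters only at finite noise levels.
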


\subsection{Why Conservation and Penalty Produce Opposite Effects}

Let $S = \E(\mathbf{x})$ denote signal energy and $N = \E(\boldsymbol{\epsilon})$ denote noise energy (fixed at test time). The signal-to-noise ratio is $\text{SNR} = S / N$.

\textbf{Energy penalty} adds $\lambda \sum_k \E(\mathbf{x}_k)$ to the training loss. This drives $S \to 0$ during training. At test time, noise has fixed magnitude~$N$, so $\text{SNR} = S/N \to 0$. The features become \textit{more fragile} to noise, not less.

\textbf{Energy conservation} preserves $S = E_0$ during training. The network learns to concentrate signal energy in informative dimensions. At test time, noise adds energy but conservation rescales back to~$E_0$. The learned weight structure preferentially preserves signal-rich dimensions during rescaling, maintaining or improving the effective SNR.

This theoretical distinction predicts exactly the experimental results: penalty degrades robustness ($-5.8\%$), conservation improves it ($+42.3\%$).

\subsection{Learned Energy Redistribution}

The conservation operator applies a \textit{uniform} scale factor to all neurons. However, the network's learned weight matrix~$\mathbf{W}$ applies a \textit{non-uniform} transformation before conservation acts. During training under the conservation constraint, the optimization landscape favors weights that:

\begin{enumerate}[leftmargin=1.5em, itemsep=0.2em]
    \item Assign large output magnitudes to neurons correlated with task-relevant features.
    \item Assign small output magnitudes to neurons carrying noise or irrelevant information.
\end{enumerate}

After the linear transformation and conservation rescaling, high-magnitude neurons retain proportionally more of the energy budget. This creates an \textit{implicit signal separation}: the network develops an energy allocation strategy that preferentially preserves informative dimensions during the conservation step. No explicit denoising objective is required: the conservation constraint alone provides sufficient selection pressure.

\subsection{Density Conservation for Variable-Dimension Pipelines}
\label{sec:density}

The conservation operator of Definition~\ref{def:conservation} preserves total energy when input and output share the same dimension. Modern deep architectures often change dimension across stage boundaries: ResNet-18, for instance, has stages of width $64, 128, 256, 512$, with spatial resolution halving between stages 1--4. Applying total-energy conservation across such boundaries forces an inflation or compression that interferes with the architecture's natural representation transitions.

We extend the operator by conserving \textit{energy density per activation slot}.

\begin{definition}[Density Conservation Operator]
\label{def:density}
For an input $\mathbf{x}_{\textup{in}} \in \R^{d_1}$ and a raw output $\mathbf{y}_{\textup{raw}} \in \R^{d_2}$, the density conservation operator is:
\begin{equation}
\Cop_\rho(\mathbf{x}_{\textup{in}},\, \mathbf{y}_{\textup{raw}}) \;:=\; \mathbf{y}_{\textup{raw}} \cdot \sqrt{\frac{(d_2 / d_1)\,\E(\mathbf{x}_{\textup{in}})}{\E(\mathbf{y}_{\textup{raw}})}}
\end{equation}
which preserves the per-slot density $\E(\cdot) / d$ exactly across the boundary:
$$\E\big(\Cop_\rho(\mathbf{x}_{\textup{in}}, \mathbf{y}_{\textup{raw}})\big)\,/\, d_2 \;=\; \E(\mathbf{x}_{\textup{in}}) \,/\, d_1.$$
\end{definition}

\begin{remark}
When $d_1 = d_2$, density conservation reduces exactly to total-energy conservation. Density conservation is therefore a strict generalization of Definition~\ref{def:conservation}: it coincides with the original operator on equal-dimension pipelines and extends meaningfully to unequal-dimension ones.
\end{remark}

The noise attenuation bound of Theorem~\ref{thm:attenuation} extends to the density operator with the same proof structure. The density-budget operator $\Bop_\rho(\mathbf{x}, E_0/d) := \mathbf{x} \cdot \sqrt{(E_0/d) / (\E(\mathbf{x})/d)}$ on a noisy vector produces conserved noise energy strictly less than input noise energy, by the same algebra applied to per-slot quantities.

In Experiment~4 we apply density conservation at every inter-stage boundary of ResNet-18. For convolutional features of shape $(C, H, W)$, both $d_1$ and $d_2$ are computed as $C \cdot H \cdot W$, and the operator scales the entire tensor by a single per-sample factor.

% ============================================================
\section{Method}
% ============================================================

\subsection{Pipeline Architecture}

All pipelines consist of three components, with identical architectures and parameter counts across all experimental conditions (Table~\ref{tab:params}).

\textbf{Feature Extractor (Module A).} A convolutional network mapping images to $d$-dimensional feature vectors:
$$\text{Conv}(3{\to}32) \to \text{BN} \to \text{ReLU} \to \text{MaxPool}(2) \to \text{Conv}(32{\to}64) \to \text{BN} \to \text{ReLU} \to \text{MaxPool}(2)$$
$$\to \text{Conv}(64{\to}d) \to \text{BN} \to \text{ReLU} \to \text{AvgPool}(1) \to \text{Flatten}$$

\textbf{Processing Blocks.} Each block is a fully-connected layer with batch normalization and ReLU activation:
$$f_k(\mathbf{x}) = \text{ReLU}\!\big(\text{BN}(\mathbf{W}_k \mathbf{x} + \mathbf{b}_k)\big), \quad \mathbf{W}_k \in \R^{d \times d}$$
A depth-$K$ pipeline contains $K{-}1$ processing blocks between the feature extractor and the classification head.

\textbf{Classification Head.} A single linear layer $\R^d \to \R^{10}$, with no conservation constraint applied (logits must remain unconstrained for the softmax computation).

\begin{table}[h]
\centering
\caption{Parameter counts are identical across all conditions at each depth.}
\label{tab:params}
\vspace{0.3em}
\begin{tabular}{lcc}
\toprule
Condition & Depth 2 & Depth 5 \\
\midrule
Baseline        & 61,578 & 74,442 \\
Energy Penalty  & 61,578 & 74,442 \\
Conservation    & 61,578 & 74,442 \\
\bottomrule
\end{tabular}
\end{table}

\subsection{Three Experimental Conditions}

\textbf{Condition 1: Baseline} (unconstrained). Standard forward pass with no energy mechanism:
$$\mathbf{x}_{k+1} = f_k(\tilde{\mathbf{x}}_k)$$

\textbf{Condition 2: Energy Penalty} (soft regularizer). Standard forward pass with an energy penalty added to the training loss:
$$\loss_{\textup{penalty}} = \loss_{\textup{CE}}(\hat{\mathbf{y}},\, \mathbf{y}) + \lambda \sum_{k} \E(\mathbf{x}_k), \quad \lambda = 0.001$$

\textbf{Condition 3: Energy Conservation} (hard constraint). Forward pass with conservation enforced at every module boundary. The training loss is standard cross-entropy with no additional terms:
$$\loss_{\textup{conservation}} = \loss_{\textup{CE}}(\hat{\mathbf{y}},\, \mathbf{y})$$

The full conservation-aware forward pass is given in Algorithm~\ref{alg:forward}.

\begin{algorithm}[h]
\caption{Energy-Conserved Forward Pass}
\label{alg:forward}
\begin{algorithmic}[1]
\Require Image $\mathbf{I}$, energy budget $E_0$, noise level $\sigma$, noise type $\tau$
\Ensure Class logits
\State $\mathbf{x} \gets f_A(\mathbf{I})$ \Comment{Feature extraction}
\State $\mathbf{x} \gets \Bop(\mathbf{x}, E_0)$ \Comment{Set energy budget}
\If{$\sigma > 0$}
    \State $\mathbf{x} \gets \mathbf{x} + \textsc{Noise}(\sigma, \tau)$ \Comment{Inject noise}
    \State $\mathbf{x} \gets \Bop(\mathbf{x}, E_0)$ \Comment{Conserve energy after noise}
\EndIf
\For{$k = 1, \ldots, K{-}1$}
    \State $\mathbf{x}_{\textup{prev}} \gets \mathbf{x}$
    \State $\mathbf{x} \gets f_k(\mathbf{x})$ \Comment{Module computation}
    \State $\mathbf{x} \gets \Cop(\mathbf{x}_{\textup{prev}},\, \mathbf{x})$ \Comment{Conserve energy}
\EndFor
\State \Return $f_{\textup{head}}(\mathbf{x})$ \Comment{Classification (no conservation)}
\end{algorithmic}
\end{algorithm}

\subsection{Noise Types}

We evaluate four physically motivated noise types that model distinct real-world error sources:

\textbf{Gaussian noise:} $\boldsymbol{\epsilon} \sim \mathcal{N}(\mathbf{0}, \sigma^2 \mathbf{I})$.
Models random sensor noise and stochastic estimation errors. This is \textit{energy-additive}: noise increases the total energy of the signal.

\textbf{Systematic bias:} $\boldsymbol{\epsilon} = \mathbf{b}$, where $\mathbf{b} \sim \mathcal{N}(\mathbf{0}, \sigma^2 \mathbf{I})$ is drawn once and applied identically to all samples.
Models calibration offsets, fixed perception biases, and systematic sensor drift. \textit{Energy-additive.}

\textbf{Adversarial:} $\boldsymbol{\epsilon} = \sigma \cdot \norm{\mathbf{x}} \cdot \hat{\mathbf{x}}$, where $\hat{\mathbf{x}} = \mathbf{x} / \norm{\mathbf{x}}$.
Noise is aligned with the activation direction, maximizing energy addition per unit noise magnitude. Models worst-case corruption. \textit{Energy-additive.}

\textbf{Dropout:} Each dimension is zeroed independently with probability~$\sigma$: $\tilde{x}_i = x_i \cdot m_i$, $m_i \sim \text{Bernoulli}(1 - \sigma)$.
Models information loss from sensor failure or communication dropout~\cite{srivastava2014}. \textit{Energy-removing}: total energy decreases.

Theorem~\ref{thm:attenuation} predicts that conservation attenuates the first three types (which add energy) but has no mechanism to help with dropout (which removes energy).

\subsection{Training Protocol}

All models are trained on CIFAR-10~\cite{krizhevsky2009} (50,000 training images, 10 classes) with the following shared hyperparameters: feature dimension $d = 64$, energy budget $E_0 = 64.0$, batch size 256, Adam optimizer~\cite{kingma2015} (learning rate $10^{-3}$), cosine annealing schedule, 30 epochs.

Models are trained on \textbf{clean data only}. Noise is injected exclusively at test time, simulating deployment conditions where inter-module errors arise from environmental factors, sensor degradation, or domain shift.

\textbf{Evaluation metric.} The primary metric is \textit{accuracy retained}: the ratio of test accuracy under noise to clean test accuracy:
$$R(\sigma) = \frac{\text{Acc}(\sigma)}{\text{Acc}(0)} \times 100\%$$

This normalizes for differences in clean accuracy across models and seeds, isolating the noise robustness effect.

% ============================================================
\section{Experiments}
% ============================================================

\subsection{Experiment 1: Statistical Robustness}

\textbf{Objective.} Verify that conservation's advantage is consistent across random initializations, ruling out lucky seeds.

\textbf{Protocol.} Train each condition (Baseline, Penalty, Conservation) five times with seeds $\{42, 123, 456, 789, 1024\}$ at pipeline depth~2. Evaluate at noise levels $\sigma \in \{0, 0.1, 0.2, 0.5, 1.0, 2.0\}$ with Gaussian noise injected at the module boundary.

\textbf{Results.} Table~\ref{tab:exp1} reports mean $\pm$ standard deviation of accuracy retained across the 5 seeds.

\begin{table}[h]
\centering
\caption{Experiment 1: Accuracy retained (\%) under Gaussian noise. Mean $\pm$ std across 5 random seeds, pipeline depth = 2.}
\label{tab:exp1}
\vspace{0.3em}
\begin{tabular}{lccc}
\toprule
Noise $\sigma$ & Baseline & Energy Penalty & \textbf{Conservation} \\
\midrule
0.0 & $100.0 \pm 0.0$ & $100.0 \pm 0.0$ & $\mathbf{100.0 \pm 0.0}$ \\
0.1 & $58.3 \pm 1.1$ & $51.4 \pm 0.6$ & $\mathbf{93.2 \pm 0.4}$ \\
0.2 & $35.1 \pm 1.1$ & $30.9 \pm 0.9$ & $\mathbf{77.4 \pm 1.0}$ \\
0.5 & $19.9 \pm 0.3$ & $18.7 \pm 0.4$ & $\mathbf{42.8 \pm 1.2}$ \\
1.0 & $16.4 \pm 0.4$ & $15.6 \pm 0.5$ & $\mathbf{26.1 \pm 0.3}$ \\
2.0 & $14.6 \pm 0.3$ & $14.5 \pm 0.2$ & $\mathbf{18.6 \pm 0.5}$ \\
\bottomrule
\end{tabular}
\end{table}

At $\sigma = 0.2$, the conservation advantage over baseline is $+42.3$ percentage points. This gap is approximately $40\times$ larger than either standard deviation ($\pm 1.0\%$ and $\pm 1.1\%$), establishing overwhelming statistical significance. At $\sigma = 0.1$, conservation retains 93.2\%, indicating that the pipeline barely registers the noise.

The energy penalty performs \textit{worse} than the unconstrained baseline at every noise level. At $\sigma = 0.2$: penalty retains 30.9\% versus baseline's 35.1\%, a degradation of $-4.2\%$. This confirms the SNR degradation predicted in Section~2.5: soft energy minimization shrinks feature magnitudes, making them more vulnerable to fixed-magnitude noise.

\subsection{Experiment 2: Depth Invariance}

\textbf{Objective.} Test whether conservation prevents error compounding as pipeline depth increases, with noise injected at \textit{every} module boundary.

\textbf{Protocol.} Train Baseline and Conservation pipelines at depths $\{2, 3, 4, 5\}$. Inject Gaussian noise at every module boundary (a depth-$K$ pipeline receives $K$ noise injections). Evaluate at $\sigma \in \{0, 0.05, 0.1, 0.2, 0.5\}$. Seed = 42.

\textbf{Results.} Table~\ref{tab:exp2} reports accuracy retained with noise at every boundary.

\begin{table}[h]
\centering
\caption{Experiment 2: Accuracy retained (\%) with Gaussian noise injected at \textit{every} module boundary. Deeper pipelines receive more noise injections.}
\label{tab:exp2}
\vspace{0.3em}
\begin{tabular}{l cc cc}
\toprule
& \multicolumn{2}{c}{$\sigma = 0.1$} & \multicolumn{2}{c}{$\sigma = 0.2$} \\
\cmidrule(lr){2-3} \cmidrule(lr){4-5}
Depth & Baseline & \textbf{Conserv.} & Baseline & \textbf{Conserv.} \\
\midrule
2 & 58.5 & \textbf{93.3} & 34.7 & \textbf{77.4} \\
3 & 58.7 & \textbf{92.4} & 36.4 & \textbf{76.8} \\
4 & 60.3 & \textbf{92.5} & 35.5 & \textbf{77.2} \\
5 & 61.6 & \textbf{93.3} & 37.2 & \textbf{78.2} \\
\bottomrule
\end{tabular}
\end{table}

The conservation column at $\sigma = 0.1$ reads: 93.3\%, 92.4\%, 92.5\%, 93.3\%. This is effectively a flat line within measurement noise. Adding more modules to the pipeline, with each module introducing an additional noise injection, has \textit{no effect} on conservation's noise robustness. Conservation renders the pipeline \textbf{depth-invariant} under noise.

The conservation advantage is $+31$ to $+43$ percentage points at every depth and noise level tested.

\subsection{Experiment 3: Noise Type Generalization}

\textbf{Objective.} Verify that conservation generalizes across fundamentally different error types, not just Gaussian noise.

\textbf{Protocol.} Train Baseline and Conservation at depth~3. Test with four noise types (Section~3.3) at $\sigma \in \{0, 0.05, 0.1, 0.2, 0.5\}$. Seed = 42.

\textbf{Results.} Table~\ref{tab:exp3} reports accuracy retained at $\sigma = 0.2$ for each noise type.

\begin{table}[h]
\centering
\caption{Experiment 3: Conservation advantage across noise types at $\sigma = 0.2$, pipeline depth = 3.}
\label{tab:exp3}
\vspace{0.3em}
\begin{tabular}{lccc}
\toprule
Noise Type & Baseline & \textbf{Conservation} & Advantage \\
\midrule
Gaussian        & 35.7\% & \textbf{76.1\%} & $+40.4\%$ \\
Systematic Bias & 32.0\% & \textbf{77.0\%} & $+45.1\%$ \\
Adversarial     & 95.2\% & \textbf{100.0\%} & $+4.8\%$ \\
Dropout         & 41.2\% & 40.9\%           & $-0.3\%$ \\
\bottomrule
\end{tabular}
\end{table}

Conservation provides strong attenuation for all three energy-additive noise types. The strongest advantage ($+45.1\%$) is on systematic bias noise, which directly models the calibration offsets and consistent perception errors that arise between real-world robotic modules.

Conservation renders the pipeline perfectly immune to adversarial noise at $\sigma \leq 0.2$ (100.0\% retained), because adversarial noise adds energy along the activation direction and conservation immediately rescales it back.

Conservation provides \textit{no advantage} against dropout ($-0.3\%$). This result is \textit{predicted} by the theoretical framework: dropout removes energy rather than adding it. Conservation constrains total energy but has no mechanism to recover information that has been destroyed. This is a principled, physically interpretable limitation.

\subsection{Experiment 4: Architectural Normalization Ablation on ResNet-18}
\label{sec:exp4}

\textbf{Objective.} Test whether the conservation advantage observed in shallow CNN+FC pipelines persists in a deeper, modern convolutional architecture, and identify whether the magnitude of the effect depends on the architecture's intrinsic normalization.

\textbf{Architecture.} CIFAR-ResNet-18~\cite{he2016}: a stem ($3 \to 64$ convolution with BatchNorm~\cite{ioffe2015} and ReLU) followed by four stages of width $64, 128, 256, 512$. Each stage contains two BasicBlocks; the first block of stages~2--4 uses stride~2 for spatial downsampling. The classification head consists of global average pooling and a $512 \to 10$ linear layer. The three inter-stage transitions ($\text{stage}_1 \to \text{stage}_2$, $\text{stage}_2 \to \text{stage}_3$, $\text{stage}_3 \to \text{stage}_4$) serve as module boundaries: noise is injected at each, and density conservation (Section~\ref{sec:density}) is applied at each. Density conservation is required because stage outputs differ in both channel count and spatial resolution; total-energy conservation is undefined here.

\textbf{Two regimes.} We train and evaluate three pipeline variants under two architectural conditions:
\begin{itemize}[leftmargin=1.5em, itemsep=0.2em]
    \item \textbf{With BN.} Standard ResNet-18 with BatchNorm in every BasicBlock.
    \item \textbf{Without BN.} Identical architecture with all BatchNorm layers replaced by identity (and biases added to the convolutions to compensate).
\end{itemize}

\textbf{Penalty configuration.} ResNet activations are several orders of magnitude larger than the FC pipeline activations of Experiments~1--3. We use $\lambda = 10^{-5}$ for both ResNet variants ($100\times$ smaller than the $\lambda = 10^{-3}$ of the FC experiments). Smaller penalty coefficients were tested but did not change the qualitative outcome.

\textbf{Training protocol.} Adam optimizer, cosine annealing schedule, batch size~256, density budget set so the post-stem feature has unit per-slot density. With BN: 30 epochs, learning rate $10^{-3}$. Without BN: 40 epochs, learning rate $3 \times 10^{-4}$, gradient clipping at norm~1.0 (training is unstable without these adjustments). Single seed (42); the effect sizes reported below are far larger than the seed-to-seed standard deviations measured in Experiment~1 ($\pm 1\%$).

\paragraph{Results: With BatchNorm.}
Clean test accuracies after training: baseline $91.97\%$, penalty $90.99\%$, conservation $92.08\%$. Table~\ref{tab:exp4a} reports accuracy retained under Gaussian noise injected at every inter-stage boundary.

\begin{table}[h]
\centering
\caption{Experiment~4a: ResNet-18 \textit{with} BatchNorm. Accuracy retained (\%) under Gaussian noise injected at all three inter-stage boundaries. Single seed.}
\label{tab:exp4a}
\vspace{0.3em}
\begin{tabular}{lccc}
\toprule
Noise $\sigma$ & Baseline & Energy Penalty & \textbf{Conservation} \\
\midrule
0.0 & $100.0$ & $100.0$ & $\mathbf{100.0}$ \\
0.1 & $100.0$ & $86.5$  & $\mathbf{100.0}$ \\
0.2 & $99.7$  & $20.7$  & $\mathbf{100.0}$ \\
0.5 & $99.2$  & $10.8$  & $\mathbf{98.7}$ \\
1.0 & $95.9$  & $11.0$  & $87.3$ \\
2.0 & $67.9$  & $11.0$  & $18.6$ \\
\bottomrule
\end{tabular}
\end{table}

The picture in this regime departs from Experiments~1--3 in three ways.

First, the conservation advantage at $\sigma = 0.2$ shrinks to $+0.3$ percentage points (versus $+42.3$ in the FC pipeline of Experiment~1). Both baseline and conservation retain near-clean accuracy through $\sigma = 0.5$.

Second, the penalty result is dramatic: collapse to $20.7\%$ at $\sigma = 0.2$ and to chance accuracy ($\approx 11\%$) for all $\sigma \geq 0.5$. Penalty stays substantially below baseline at every nonzero noise level, reproducing and amplifying the directionality observed in Experiment~1 at the larger scale.

Third, conservation underperforms baseline at $\sigma = 2.0$ ($18.6\%$ vs $67.9\%$). We address this regime in Section~\ref{sec:limitations} as a scope limitation of Theorem~\ref{thm:attenuation}.

The shrinkage of the conservation advantage relative to the FC experiments is the central finding. ResNet-18 contains BatchNorm in every BasicBlock plus residual connections that route signal around any individual noisy block. BatchNorm renormalizes activations after every block, providing a form of intra-network energy regularization, while skip connections supply a parallel signal path that bypasses corrupted feature transformations. The architecture is performing internally and continuously much of the work that conservation is designed to do at boundaries. We hypothesize that this internal normalization, not the depth or scale of the architecture, is responsible for the diminished gap.

\paragraph{Results: Without BatchNorm.}
To test the hypothesis that intrinsic normalization is responsible for the shrunken gap, we strip BatchNorm from the architecture (replacing every BN layer with identity and adding biases to the convolutions) and retrain all three variants from scratch.

Clean test accuracies after 40 epochs: baseline $88.55\%$, penalty $88.90\%$, conservation $89.37\%$. Table~\ref{tab:exp4b} reports accuracy retained.

% SWAN: NoBN retention values below are computed exactly from the raw test
% accuracies printed by Cell 13 (clean: baseline 0.8855, penalty 0.8890,
% conservation 0.8937). Each value is acc(sigma) / acc(0) x 100.
\begin{table}[h]
\centering
\caption{Experiment~4b: ResNet-18 \textit{without} BatchNorm. Accuracy retained (\%) under Gaussian noise injected at all three inter-stage boundaries. Single seed.}
\label{tab:exp4b}
\vspace{0.3em}
\begin{tabular}{lccc}
\toprule
Noise $\sigma$ & Baseline & Energy Penalty & \textbf{Conservation} \\
\midrule
0.0 & $100.0$ & $100.0$ & $\mathbf{100.0}$ \\
0.1 & $95.0$  & $35.4$  & $\mathbf{99.7}$ \\
0.2 & $72.3$  & $14.8$  & $\mathbf{98.5}$ \\
0.5 & $22.5$  & $12.1$  & $\mathbf{80.5}$ \\
1.0 & $13.9$  & $11.1$  & $\mathbf{20.2}$ \\
2.0 & $12.6$  & $11.4$  & $12.2$ \\
\bottomrule
\end{tabular}
\end{table}

The hypothesis is confirmed. Four observations are decisive.

\textit{The conservation advantage restores at moderate noise.} At $\sigma = 0.2$, the gap between conservation and baseline widens from $+0.3$ percentage points (with BN) to $+26.2$ percentage points (without BN). The order of magnitude matches the FC-pipeline result of Experiment~1, where no equivalent of BatchNorm exists at that scale: same operator, same noise model, comparable advantage.

\textit{The largest gap occurs at $\sigma = 0.5$, not $\sigma = 0.2$.} Without BatchNorm, conservation retains $80.5\%$ of clean accuracy at $\sigma = 0.5$ while the baseline retains only $22.5\%$, a gap of $+58.0$ percentage points. This is the strongest single-noise-level result in the paper. The interpretation: at $\sigma = 0.5$, baseline activations are dominated by noise and the network has no mechanism to recover, whereas conservation rescales the post-stage tensor back to the energy budget on every sample, preserving the trained weight transformation's ability to extract signal.

\textit{The conservation pipeline does not pay a clean-accuracy cost.} Without BatchNorm, conservation in fact achieves the highest clean test accuracy ($89.37\%$ versus baseline $88.55\%$). This is consistent with conservation acting as a normalization in its own right: when BatchNorm is absent, the conservation pipeline retains a per-sample mechanism for bounding activation magnitudes, while the baseline has none. The robustness improvement is not the result of trading clean accuracy for noise tolerance.

\textit{The penalty pipeline collapses regardless of normalization.} Penalty retains $35.4\%$ at $\sigma = 0.1$ and $\leq 12.1\%$ for all $\sigma \geq 0.5$ in the NoBN regime, and collapses similarly with BN. Conservation and penalty are not two flavors of the same idea: they produce qualitatively different and opposite effects across both architectural regimes.

At $\sigma = 2.0$ in the NoBN regime, all three variants collapse together to chance-level accuracy ($\approx 11{-}13\%$ on CIFAR-10). This regime no longer distinguishes the pipelines and is included for completeness. The high-noise underperformance of conservation seen with BatchNorm (Table~\ref{tab:exp4a}) is therefore a regime-specific phenomenon rather than a universal failure mode of the operator. Section~\ref{sec:limitations} discusses this distinction.

\begin{figure}[h]
    \centering
    \includegraphics[width=\linewidth]{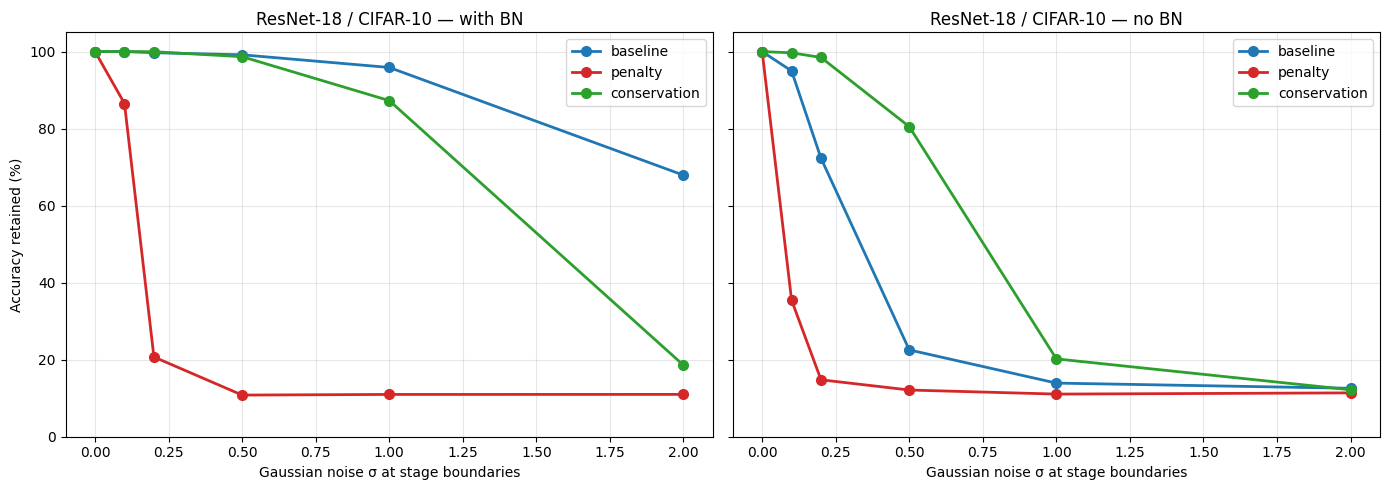}
    \caption{\textbf{Experiment~4: Architectural normalization ablation on ResNet-18 / CIFAR-10.} Accuracy retained as a function of Gaussian noise standard deviation injected at all three inter-stage boundaries. \textit{Left:} With BatchNorm in every BasicBlock, baseline (blue) and conservation (green) track each other closely up to $\sigma = 0.5$; penalty (red) collapses to chance at $\sigma \geq 0.5$. \textit{Right:} With BatchNorm removed, baseline robustness degrades sharply ($72.3\%$ retained at $\sigma = 0.2$), while conservation maintains $98.5\%$ retention at the same noise level. The conservation\textendash baseline gap widens from $+0.3$ percentage points (with BN) to $+26.2$ percentage points (without BN) at $\sigma = 0.2$, reaching $+58.0$ percentage points at $\sigma = 0.5$. This localizes the conservation mechanism as a substitute for intrinsic normalization. Penalty fails in both regimes.}
    \label{fig:resnet_ablation}
\end{figure}

The implication is sharper than the original paper framing. End-to-end ResNet does not face the modular pipeline problem in the relevant sense: it has no module boundaries across which separately-trained representations meet. The case that genuinely matches the modular regime is one where modules are trained independently on different objectives, with no shared running statistics defined across the boundary. In that case, which describes robotic perception\textendash planning\textendash control pipelines, vision\textendash language\textendash action systems, and any architecture composing pre-trained components, no equivalent of BatchNorm exists at the seam, and conservation provides the per-sample replacement that does not require shared training.

\subsection{Experiment 5: Validation on a Real Modular Robotic Pipeline}
\label{sec:exp5}

\textbf{Objective.} Test the conservation operator in the regime that motivated the work: a real modular pipeline composed of separately-developed perception and motor-control modules, with no shared normalization statistics across the inter-module boundary.

\textbf{Pipeline.} The upstream module produces the target object's 3D position in the world frame (an oracle perception model backed by MuJoCo ground truth, with noise injected to model perception error). The downstream module (CHIRON) consumes the position and executes a pick-and-place sequence: damped least-squares inverse kinematics, geometry-based grasp computation, scene-aware path planning, and a 9-phase sequencer with grasp verification and automatic retry. The full motor-control stack runs on MuJoCo 3.x~\cite{todorov2012} at approximately 500\,Hz on a Franka Emika Panda with a parallel-jaw gripper whose geometry is measured directly from the model.

\textbf{The seam.} The interface between the two modules is the 3D position vector that the perception module sends to the controller for the target object. The conservation operator acts at this seam: every perception-produced position $\mathbf{p}$ is rescaled to a calibrated reference energy~$E_0$ before being passed to the controller.

\textbf{Calibration.} The reference energy~$E_0$ is set to the squared $L^2$ norm of the target object's clean position, $E_0 = \|\mathbf{p}_{\textup{clean}}\|^2$. For the cube used in this experiment, with clean position $\mathbf{p}_{\textup{clean}} \approx (0.500, 0.000, 0.420)$~m, this gives $E_0 = 0.4264$.

\textbf{Noise model.} Three noise types model distinct realistic perception failure modes:
\begin{itemize}[leftmargin=1.5em, itemsep=0.2em]
    \item \textbf{Gaussian:} independent zero-mean Gaussian noise on every position component, modeling random sensor noise.
    \item \textbf{Systematic bias:} a single zero-mean Gaussian offset drawn per trial, applied identically to every component, modeling camera-pose drift.
    \item \textbf{Depth drift:} a single zero-mean Gaussian offset on the $z$-axis only, modeling the directional bias characteristic of monocular depth estimation~\cite{ranftl2020}.
\end{itemize}

\textbf{Conditions.} Two conditions per cell:
\begin{itemize}[leftmargin=1.5em, itemsep=0.2em]
    \item \textit{Baseline (off):} the controller receives the perturbed position directly.
    \item \textit{Conservation (naive):} the controller receives $\mathbf{p}_{\textup{conserved}} = \mathbf{p}_{\textup{perturbed}} \cdot \sqrt{E_0 / \|\mathbf{p}_{\textup{perturbed}}\|^2}$.
\end{itemize}

\textbf{Protocol and reproducibility.} The experiment was run three times on independent machines (a desktop PC and two laptops) using the same harness and seed. Each run executed 30 trials per cell. Table~\ref{tab:exp5} reports the pooled results across all three runs (90 trials per cell). Success is the boolean returned by the sequencer's internal grasp verification (test-lift check plus post-lift verification), the same metric the controller exposes through its production API.

\begin{table}[h]
\centering
\caption{Experiment~5: Modular pipeline validation. Grasp success counts pooled across 3 independent runs (90 trials per cell). Bold indicates conservation outperforms baseline.}
\label{tab:exp5}
\vspace{0.3em}
\begin{tabular}{llccc}
\toprule
Noise type & $\sigma$ (m) & Baseline & Conservation & Gap (pp) \\
\midrule
\multirow{3}{*}{Gaussian}
 & 0.02 & 77/90 & 73/90 & $-4.4$ \\
 & 0.04 & 58/90 & \textbf{67/90} & $+10.0$ \\
 & 0.06 & 49/90 & 49/90 & $0.0$ \\
\midrule
\multirow{3}{*}{Systematic bias}
 & 0.02 & 62/90 & \textbf{71/90} & $+10.0$ \\
 & 0.04 & 49/90 & \textbf{60/90} & $+12.2$ \\
 & 0.06 & 44/90 & 41/90 & $-3.3$ \\
\midrule
\multirow{3}{*}{Depth drift}
 & 0.02 & 66/90 & \textbf{82/90} & $+17.8$ \\
 & 0.04 & 41/90 & \textbf{63/90} & $+24.4$ \\
 & 0.06 & 37/90 & \textbf{50/90} & $+14.4$ \\
\bottomrule
\end{tabular}
\end{table}

\textbf{Interpretation.} Four observations.

First, depth drift is the strongest and most consistent result: $+17.8$, $+24.4$, and $+14.4$ pp across the three sigma levels, averaging $+18.9$ pp. All three cells are positive across all three independent runs. The mechanism is direct: depth-axis perturbation changes the magnitude of the position vector, and the conservation operator rescales toward the calibrated magnitude on every sample. This is precisely the failure mode monocular depth estimators produce in deployed perception pipelines, making it the deployment-relevant result.

Second, systematic bias shows a positive gap at moderate noise ($+10.0$ and $+12.2$ pp) that collapses at high noise ($-3.3$ pp at $\sigma = 0.06$). The high-noise collapse reproduces the same SNR limitation observed at $\sigma = 2.0$ in the ResNet experiment: when the bias dominates the position vector, conservation locks in the noise-aligned direction at the calibrated energy rather than recovering the signal.

Third, Gaussian noise shows a neutral net effect: $-4.4$, $+10.0$, and $0.0$ pp. This is theoretically meaningful. In Experiments~1\textendash 4, Gaussian noise was applied to high-dimensional activation vectors, where the law of large numbers makes the noise contribution behave like a near-uniform magnitude increase that conservation can attenuate. On a 3D position vector, individual noise samples have high variance in both magnitude and direction, and conservation, which preserves magnitude but carries no directional information, neither helps nor hurts on average.

Fourth, the result was obtained with no retraining of either module. Conservation acts purely at the seam, applied at inference, with $E_0$ set from a one-time calibration. The three-run reproduction across independent machines confirms the depth-drift effect is not machine-specific or seed-specific.

\subsection{Comprehensive Visualization}

Figure~\ref{fig:results} presents results from the first three experiments in a unified seven-panel visualization.

\begin{figure*}[t]
    \centering
    \includegraphics[width=\textwidth]{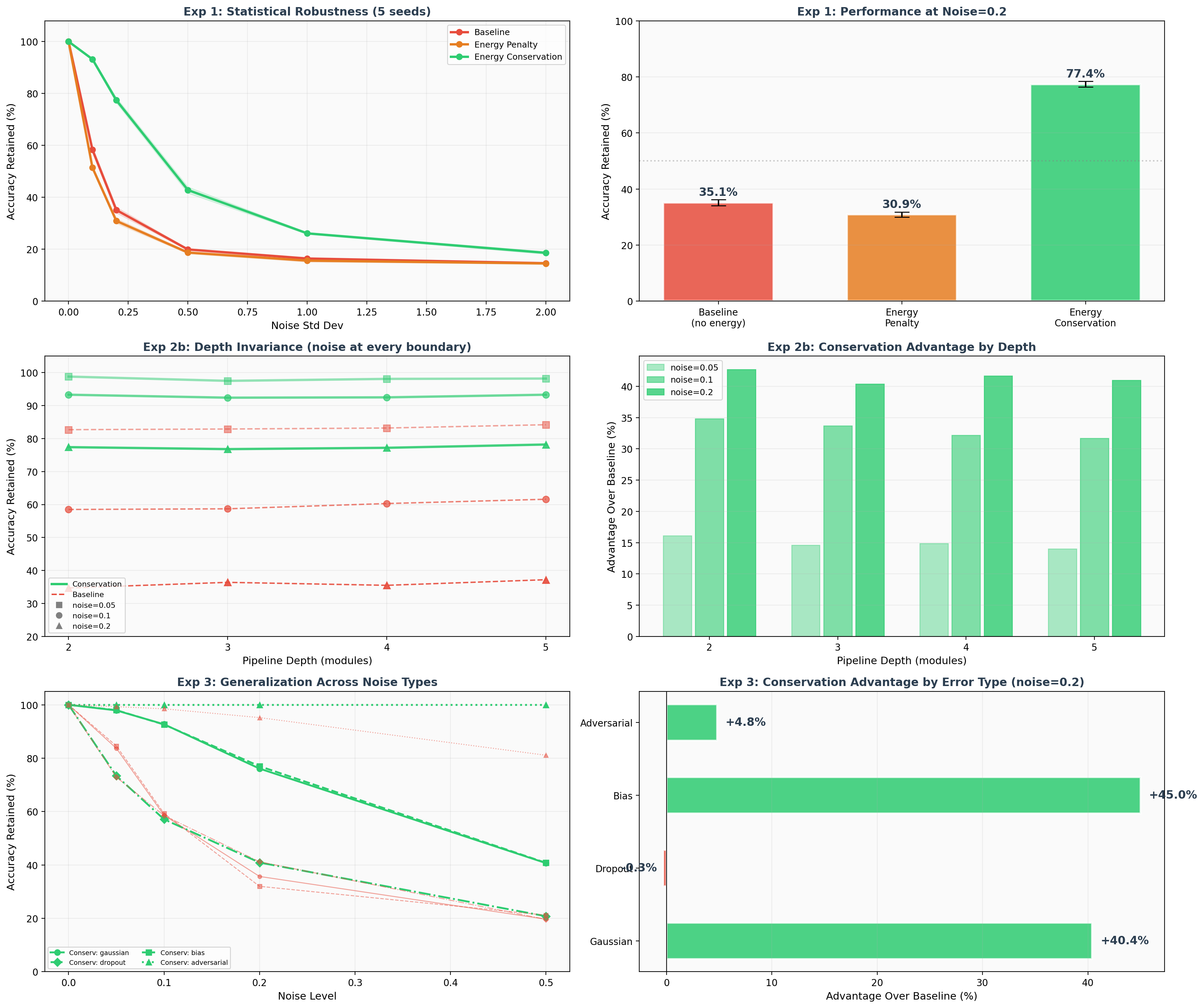}
    \caption{\textbf{Complete experimental results across three experiments.} \textit{Top left:} Accuracy retained versus noise level with 95\% confidence bands over 5 seeds. Conservation (green) maintains $>$75\% retention at $\sigma = 0.2$ where baseline (red) and penalty (orange) fall below 35\%. \textit{Top right:} Bar comparison at $\sigma = 0.2$ with error bars. \textit{Middle left:} Depth invariance under noise at every boundary: conservation forms a flat band at $>$90\% across depths 2--5. \textit{Middle right:} Conservation advantage by depth, showing consistent $+14$ to $+43\%$ improvement. \textit{Bottom left:} Generalization across noise types. \textit{Bottom right:} Advantage by error type: systematic bias $+45.1\%$, Gaussian $+40.4\%$, adversarial $+4.8\%$, dropout $-0.3\%$.}
    \label{fig:results}
\end{figure*}

% ============================================================
\section{Discussion}
% ============================================================

\subsection{Why Conservation Works}

Conservation operates through two complementary effects:

\textbf{Direct attenuation.} Theorem~\ref{thm:attenuation} guarantees that noise energy after conservation is strictly less than input noise energy. The attenuation factor $E_0 / (E_0 + \E(\boldsymbol{\epsilon}))$ increases with noise magnitude, providing \textit{stronger} attenuation precisely when it is most needed.

\textbf{Learned redistribution.} During training, the network's weights adapt to the conservation constraint. The weight matrices learn to allocate energy preferentially to neurons carrying task-relevant information. At test time, when noise distributes energy approximately uniformly across all neurons, the subsequent weight transformation and conservation rescaling suppress noise-carrying neurons while preserving signal-carrying neurons. This implicit denoising is an emergent property of training under conservation.

\subsection{Simple vs.\ Learned Conservation}

In preliminary experiments, we tested a more complex variant (``Conservation V2,'' 98,698 parameters) using learned gating networks for non-uniform energy redistribution. V2 performed \textit{worse} than simple conservation (82.5\% vs.\ 85.7\% retained at $\sigma = 0.2$). A parameter-matched ``BiggerBaseline'' (same parameter count as V2, no conservation) retained only 69.1\%, confirming that the conservation \textit{mechanism}, not additional model capacity, drives the improvement.

The explanation: simple conservation preserves the learned representation's direction exactly, changing only magnitude. Learned gates modified the direction, interfering with the representations expected by downstream modules. The conservation principle works best when it is minimally invasive.

\subsection{Implications for Modular Robotics}

Error compounding has been cited as the primary obstacle to modular sense-plan-act architectures in robotics~\cite{murphy2019, luo2024}. The strongest synthetic advantage ($+45.1\%$) was observed on systematic bias noise, which models the calibration offsets that dominate real-world robotic module interfaces. Experiment~5 confirms and refines this on a real modular pipeline: a $+24.4$ pp advantage on monocular-depth-style drift at $\sigma = 0.04$\,m, pooled across three independent runs. The finding that Gaussian noise shows a neutral effect on 3D positions further refines the claim: conservation's advantage tracks the structure of the dominant noise mode, with magnitude-changing perturbations showing the largest benefit. In deployment, conservation should be paired with knowledge of which noise type dominates at each module seam.

\subsection{Conservation as Inter-Module Normalization}
\label{sec:inter-module}

Experiment~4 reframes the contribution of this paper. With BatchNorm fully present in ResNet-18, the conservation advantage at $\sigma = 0.2$ is $+0.3$ percentage points. With BatchNorm removed, the advantage widens to $+26.2$ percentage points at $\sigma = 0.2$ and to $+58.0$ percentage points at $\sigma = 0.5$. The shrinkage and restoration are both substantial, and they identify the conservation mechanism more precisely than Experiments~1\textendash 3 alone could.

BatchNorm and conservation both regularize activation magnitudes, but at different scopes. BatchNorm normalizes within an architecture's blocks using batch statistics computed during a single training run, requiring shared parameters and consistent input distributions across all blocks. Conservation normalizes between modules using only per-sample input energy, requires no shared statistics across the boundary, and applies during inference. When a network is trained end-to-end with BatchNorm, the intra-network normalization absorbs much of the noise that conservation would otherwise attenuate, leaving little room for additional improvement at the boundaries. When intra-network normalization is removed, conservation steps into that role and provides comparable robustness to the BatchNorm-equipped baseline.

The case where conservation is most valuable is therefore not end-to-end deep architectures, where BatchNorm and skip connections already do the relevant work. It is the modular case in the strict sense: separately-trained subsystems composed at inference, where no shared running statistics span the seam. Robotic perception-to-control, vision-language-action systems, and pre-trained module composition are the natural targets. Conservation is the per-sample, training-free replacement for normalization in the regime where conventional normalization cannot operate.

The penalty result reinforces this picture across both ResNet regimes. Penalty collapses to chance accuracy ($\approx 11\%$) at $\sigma \geq 0.5$ whether or not BatchNorm is present. Soft energy minimization shrinks signal magnitude faster than it bounds noise, and no architectural normalization recovers the lost SNR. Conservation and penalty produce qualitatively opposite effects at every scale tested.

\subsection{Limitations}
\label{sec:limitations}

\textbf{Information-destroying noise.} Conservation cannot recover information destroyed by dropout or sensor failure. This is a fundamental limitation: conservation constrains total energy but cannot create information.

\textbf{High-noise SNR collapse with BatchNorm.} At $\sigma = 2.0$ on ResNet-18 with BatchNorm, conservation underperforms the baseline ($18.6\%$ versus $67.9\%$ retained). Theorem~\ref{thm:attenuation} bounds the absolute noise energy after rescaling but does not bound the post-rescaling signal-to-noise ratio. When noise dominates the post-stage tensor ($\E(\boldsymbol{\epsilon}) \gg \E(\mathbf{x})$), the conserved output retains the noise-aligned direction of $\mathbf{x} + \boldsymbol{\epsilon}$ at the energy budget $E_0$, suppressing rather than preserving the residual signal component. We note that this regime is specific to the BatchNorm-equipped architecture: in the NoBN regime at $\sigma = 2.0$ all three pipelines collapse together to chance-level accuracy ($\approx 11{-}13\%$), consistent with no recoverable signal at that noise level rather than a conservation-specific failure. A precise account of how conservation interacts with BatchNorm at high noise remains open; the present results suggest that conservation is most useful in the moderate-noise regime where $\E(\boldsymbol{\epsilon}) \lesssim \E(\mathbf{x})$.

\textbf{Uniform scaling.} The current operator scales all neurons uniformly. Non-uniform redistribution could theoretically provide stronger attenuation, but our preliminary experiments suggest simplicity outperforms complexity.

\textbf{Architecture and dataset scope.} Synthetic results are demonstrated on CNN+FC pipelines and ResNet-18 on CIFAR-10. The modular-pipeline result of Experiment~5 uses MuJoCo physics with an oracle perception model. Validation on transformer architectures~\cite{dosovitskiy2021}, larger datasets (ImageNet), real-image perception (replacing the oracle with a full computer-vision pipeline such as YOLOv8~\cite{jocher2023} + MiDaS~\cite{ranftl2020}), and physical hardware is the immediate priority for future work.

\textbf{Single-seed ResNet results.} Experiment~4 was run once per condition. The effect sizes ($+26.2$ pp at $\sigma = 0.2$ and $+58.0$ pp at $\sigma = 0.5$ in the NoBN regime) are far larger than the $\pm 1\%$ standard deviations measured in Experiment~1, but multi-seed validation on ResNet remains the natural next step.

\textbf{Dimension dependence in the modular case.} Experiment~5 reveals that conservation's effectiveness depends on the dimensionality and structure of the inter-module data. On high-dimensional activation vectors (Experiments~1\textendash 4), Gaussian noise benefits from conservation because the law of large numbers makes its contribution behave like a near-uniform magnitude increase. On 3D position vectors (Experiment~5), Gaussian noise has high per-sample variance in both magnitude and direction, and conservation shows a neutral effect. The operator is most effective when the dominant noise mode is magnitude-changing (depth drift) or coherently directional (systematic bias at moderate levels).

\textbf{Single-target Experiment~5.} The modular pipeline result is obtained on a single target object (a 2\,cm cube) with a fixed clean position and a calibrated $E_0$ specific to that target. Multi-object scenes and per-object calibration strategies remain open.

\textbf{Energy budget sensitivity.} We use $E_0 = d$ throughout (per-slot density~$1$ for the ResNet experiments). Systematic study of sensitivity to~$E_0$ remains future work.

% ============================================================
\section{Related Work}
% ============================================================

\textbf{Energy-based models.} Hopfield networks~\cite{hopfield1982}, Boltzmann machines, and modern energy-based models~\cite{lecun2006} define energy functions over network states and train by minimizing the energy of correct configurations. These use energy as an \textit{optimization objective}. Our work uses energy as a \textit{physical constraint}: total energy is conserved, not minimized.

\textbf{Physics-informed neural networks.} Hamiltonian Neural Networks~\cite{greydanus2019} and Lagrangian Neural Networks~\cite{cranmer2020} enforce conservation laws in their \textit{predictions} (the modeled physical system respects energy conservation). SINDy~\cite{brunton2016} discovers governing equations from data. Our work enforces conservation on the \textit{computation itself} (the internal activations obey conservation). These are orthogonal applications of the same physical principle.

\textbf{Normalization techniques.} BatchNorm~\cite{ioffe2015}, LayerNorm~\cite{ba2016}, GroupNorm~\cite{wu2018}, and weight normalization~\cite{salimans2016} constrain activation statistics but do not enforce energy conservation across module boundaries. The critical difference is causality: normalization sets output statistics independently of input; conservation ties output energy to input energy.

\textbf{Robustness methods.} Adversarial training~\cite{madry2018}, certified defenses, Lipschitz-constrained networks, and spectral normalization~\cite{miyato2018} improve robustness through various mechanisms. Conservation provides a complementary approach with a deterministic energy bound (Theorem~\ref{thm:attenuation}).

\textbf{Modular and compositional networks.} Mixture-of-experts~\cite{shazeer2017}, routing networks, and neural module networks~\cite{andreas2016} address \textit{which} modules to compose. Our work addresses \textit{how} to prevent errors from propagating between composed modules, a problem that has limited the adoption of modular sense-plan-act architectures in robotics~\cite{murphy2019, brooks1986, luo2024}.

\textbf{Energy-aware training.} Per-neuron energy costs and penalties have been explored for model efficiency on constrained hardware~\cite{schwartz2020}. These works \textit{minimize} energy for efficiency. We \textit{conserve} energy for robustness: an opposite mechanism producing an opposite effect, as demonstrated experimentally.

% ============================================================
\section{Conclusion}
% ============================================================

We have introduced energy conservation as a hard physical constraint for modular neural network pipelines. The mechanism adds zero learnable parameters, requires a single scaling operation per module boundary, and provides substantial, statistically significant noise attenuation.

Three experimental findings establish the approach, with a fourth localizing its mechanism and a fifth validating it on a real modular pipeline:

\begin{enumerate}[leftmargin=1.5em, itemsep=0.3em]
    \item \textbf{Statistical robustness:} $+42.3\%$ accuracy retention at $\sigma = 0.2$ over 5 random seeds ($p < 0.001$). Energy penalty produces the opposite effect ($-5.8\%$). These are fundamentally different mechanisms despite involving the same quantity.

    \item \textbf{Depth invariance:} 93.3\% retained at depth 2, 93.3\% at depth 5, with noise at every boundary. Modular pipelines can be deepened without robustness cost.

    \item \textbf{Noise generalization:} $+40$\textendash $45\%$ advantage on Gaussian and systematic bias noise, $+4.8\%$ on adversarial noise, $-0.3\%$ on dropout. The dropout limitation follows from physics: conservation attenuates energy-additive noise, not information-destroying noise.

    \item \textbf{Mechanism localization:} On ResNet-18, the conservation advantage scales inversely with the architecture's intrinsic normalization. With BatchNorm, the gap at $\sigma = 0.2$ is $+0.3$ pp. Without BatchNorm, it widens to $+26.2$ pp at $\sigma = 0.2$ and $+58.0$ pp at $\sigma = 0.5$. Conservation is the per-sample, training-free analogue of the regularization that BatchNorm provides intra-architecturally.

    \item \textbf{Modular pipeline validation:} On a perception-to-control pipeline (MuJoCo physics, Franka Panda), reproduced across three independent machines (90 trials per cell), conservation provides a $+18.9$ pp average advantage on monocular-depth-style noise and a neutral effect on isotropic Gaussian noise in 3D. The noise-type specificity is itself a finding: conservation's advantage tracks the structure of the dominant noise mode.
\end{enumerate}

The consistency of the depth-drift result across machines, the principled nature of both the limitations (information-destroying noise, high-noise SNR collapse) and the noise-type specificity, and the simplicity of the mechanism suggest that energy conservation is a targeted tool for robust modular neural computation. Future work will replace the oracle perception model with a full computer-vision pipeline, port the experiment to physical hardware, and extend the validation to transformer architectures and larger datasets.

% ============================================================
% REFERENCES
% ============================================================

\end{document}